\title{Learning unbelievable marginal probabilities}
\author{
Xaq~Pitkow\\
Department of Brain and Cognitive Science\\
University of Rochester\\
Rochester, NY 14607 \\
\texttt{xaq@post.harvard.edu} \\
\And
Yashar~Ahmadian \\
Center for Theoretical Neuroscience \\
Columbia~University \\
New York, NY 10032 \\
\texttt{ya2005@columbia.edu} \\
\AND
Ken~D.~Miller \\
Center for Theoretical Neuroscience \\
Columbia~University \\
New York, NY 10032 \\
\texttt{ken@neurotheory.columbia.edu} \\
}
\newcommand{\la}{\left\langle}
\newcommand{\ra}{\right\rangle}
\newcommand{\be}{\begin{equation}}
\newcommand{\ee}{\end{equation}}
\newcommand{\bea}{\begin{eqnarray}}
\newcommand{\eea}{\end{eqnarray}}
\newcommand{\lbr}{\left[}
\newcommand{\rbr}{\right]}
\newcommand{\vb}{{\boldsymbol{b}}}
\newcommand{\vp}{{\boldsymbol{p}}}
\newcommand{\vq}{{\boldsymbol{q}}}
\newcommand{\vk}{{\boldsymbol{k}}}
\newcommand{\vm}{{\boldsymbol{m}}}
\newcommand{\vs}{{\boldsymbol{s}}}
\newcommand{\vu}{{\boldsymbol{u}}}
\newcommand{\vv}{{\boldsymbol{v}}}
\newcommand{\vx}{{\boldsymbol{x}}}
\newcommand{\vtheta}{{\boldsymbol{\theta}}}
\newcommand{\vphi}{{\boldsymbol{\phi}}}
\newcommand{\veta}{{\boldsymbol{\eta}}}
\newcommand{\tp}{{\scriptscriptstyle +}}
\newtheorem{theorem}{Theorem}
\newtheorem{definition}{Definition}
\begin{document}

\maketitle

\begin{abstract}
Loopy belief propagation performs approximate inference on graphical models with loops. One might hope to compensate for the approximation by adjusting model parameters. Learning algorithms for this purpose have been explored previously, and the claim has been made that every set of locally consistent marginals can arise from belief propagation run on a graphical model. On the contrary, here we show that many probability distributions have marginals that cannot be reached by belief propagation using any set of model parameters or any learning algorithm. We call such marginals `unbelievable.' This problem occurs whenever the Hessian of the Bethe free energy is not positive-definite at the target marginals. All learning algorithms for belief propagation necessarily fail in these cases, producing beliefs or sets of beliefs that may even be worse than the pre-learning approximation. We then show that averaging inaccurate beliefs, each obtained from belief propagation using model parameters perturbed about some learned mean values, can achieve the unbelievable marginals.
\end{abstract}

\section{Introduction}

Calculating marginal probabilities for a graphical model generally requires summing over exponentially many states, and is NP-hard in general \cite{Cooper:1990p8103}. A variety of approximate methods have been used to circumvent this problem. One popular technique is belief propagation (BP), in particular the sum-product rule, which is a message-passing algorithm for performing inference on a graphical model \cite{Pearl:1988p7576}. Though exact and efficient on trees, it is merely an approximation when applied to graphical models with loops.

A natural question is whether one can compensate for the shortcomings of the approximation by setting the model parameters appropriately. In this paper, we prove that some sets of marginals simply cannot be achieved by belief propagation. For these cases we provide a new algorithm that can achieve much better results by using an ensemble of parameters rather than a single instance.

We are given a set of variables $\vx$ with a given probability distribution $P(\vx)$ of some data. We would like to construct a model that reproduces certain of its marginal probabilities, in particular those over individual variables $p_i(x_i)=\sum_{\vx\setminus x_i}P(\vx)$ for nodes $i\in V$, and those over some relevant clusters of variables, $p_\alpha(\vx_\alpha)=\sum_{\vx\setminus \vx_\alpha}P(\vx)$ for $\alpha=\{i_1,\ldots,i_{d_\alpha}\}$. We will write the collection of all these marginals as a vector $\vp$.

We assume a model distribution $Q_0(\vx)$ in the exponential family taking the form
\be
Q_0(\vx)=e^{-E(\vx)}/Z
\label{eq:BoltzmannDistribution}
\ee
with normalization constant $Z=\sum_\vx e^{-E(\vx)}$ and energy function
\be
E(\vx)=-\sum_\alpha \vtheta_{\alpha} \cdot\vphi_{\alpha}(\vx_\alpha)
\label{eq:EnergyFunction}
\ee
Here, $\alpha$ indexes sets of interacting variables (factors in the factor graph \cite{Kschischang:2001p8492}), and $\vx_\alpha$ is a subset of variables whose interaction is characterized by a vector of sufficient statistics $\vphi_\alpha(\vx_\alpha)$ and corresponding natural parameters $\vtheta_{\alpha}$. We assume without loss of generality that each $\vphi_\alpha(\vx_\alpha)$ is irreducible, meaning that it cannot be written as a sum of any linearly independent functions that themselves do not depend on any $x_i$ for $i\in\alpha$. We collect all these sufficient statistics and natural parameters in the vectors $\vphi$ and $\vtheta$.

Normally when learning a graphical model, one would fit its parameters so the marginal probabilities match the target. Here, however, we will not use {\it exact} inference to compute the marginals. Instead we will use {\it approximate} inference via loopy belief propagation to match the target.

\section{Learning in Belief Propagation}

\subsection{Belief propagation}

The sum-product algorithm for belief propagation on a graphical model with energy function (\ref{eq:EnergyFunction}) uses the following equations \cite{Bishop:2006p8189}:
\be
\label{eq:Messages}
m_{i\to \alpha}(x_i)\propto\!\!\!\prod_{\beta\in N_i\setminus \alpha}\!\!\!m_{\beta\to i}(x_i)\hspace{3em}
m_{\alpha\to i}(x_i)\propto\sum_{\vx_\alpha\setminus x_i}e^{\vtheta_{\alpha}\cdot\vphi_{\alpha}(\vx_\alpha)}\!\!\prod_{j\in N_\alpha\setminus i}m_{j\to\alpha}(x_j)
\ee
where $N_i$ and $N_\alpha$ are the neighbors of node $i$ or factor $\alpha$ in the factor graph. Once these messages converge, the single-node and factor beliefs are given by
\be
\label{eq:PairwiseBeliefs}
b_i(x_i) \propto \prod_{\alpha\in N_i} m_{\alpha\to i}(x_i)\hspace{3em}
b_{\alpha}(\vx_\alpha) \propto e^{\vtheta_{\alpha}\cdot\vphi_{\alpha}(\vx_\alpha)}\!\!\prod_{i\in N_\alpha}\!\!m_{i\to \alpha}(x_i)
\ee
where the beliefs must each be normalized to one. For tree graphs, these beliefs exactly equal the marginals of the graphical model $Q_0(\vx)$. For loopy graphs, the beliefs at fixed points are often good approximations of the marginals. While they are guaranteed to be locally consistent, $\sum_{\vx_\alpha\setminus x_i}b_{\alpha}(\vx_\alpha)=b_i(x_i)$, they are not necessarily globally consistent: There may not exist a single joint distribution $B(\vx)$ of which the beliefs are the marginals \cite{JWainwright:2008p7968}. This is why the resultant beliefs are called {\it pseudomarginals}, rather than simply marginals. We use a vector $\vb$ to refer to the set of both node and factor beliefs produced by belief propagation.

\subsection{Bethe free energy}

Despite its limitations, BP is found empirically to work well in many circumstances. Some theoretical justification for loopy belief propagation emerged with proofs that its stable fixed points are local minima of the Bethe free energy \cite{Yedidia:2001p6930,Heskes:2003p6866}. Free energies are important quantities in machine learning because the Kullback-Leibler divergence between the data and model distributions can be expressed in terms of free energies, so models can be optimized by minimizing free energies appropriately.

Given an energy function $E(\vx)$ from (\ref{eq:EnergyFunction}), the Gibbs free energy of a distribution $Q(\vx)$ is
\be
\label{eq:FreeEnergy}
F[Q]=U[Q]-S[Q]
\ee
where $U$ is the average energy of the distribution
\be
U[Q]=\sum_\vx E(\vx)Q(\vx)=-\sum_\alpha\vtheta_\alpha\cdot\sum_{\vx_\alpha}\vphi_\alpha(\vx_\alpha)q_\alpha(\vx_\alpha)
\ee
which depends on the marginals $q_\alpha(\vx_\alpha)$ of $Q(\vx)$, and $S$ is the entropy
\be
S[Q]=-\sum_\vx Q(\vx)\log{Q(\vx)}
\ee
Minimizing the Gibbs free energy $F[Q]$ recovers the distribution $Q_0(\vx)$ for the graphical model (\ref{eq:BoltzmannDistribution}).

The Bethe free energy $F^\beta$ is an approximation to the Gibbs free energy,
\be
\label{eq:BetheFreeEnergy}
F^\beta[Q]=U[Q]-S^\beta[Q]
\ee
in which the average energy $U$ is exact, but the true entropy $S$ is replaced by an approximation, the Bethe entropy $S^\beta$, which is a sum over the factor and node entropies \cite{Yedidia:2001p6930}:
\be
\label{eq:BetheEntropyWhole}
S^\beta[Q]=\sum_{\alpha} S_\alpha[q_\alpha]+\sum_{i} (1-d_i)S_i[q_i]
\ee
\be
S_\alpha[q_\alpha]=-\sum_{\vx_\alpha}q_\alpha(\vx_\alpha)\log{q_\alpha(\vx_\alpha)}\hspace{3em}
S_i[q_i]=-\sum_{x_i}q_i(x_i)\log{q_i(x_i)}
\ee
The coefficients $d_i=|N_i|$ are the number of factors neighboring node $i$, and compensate for the overcounting of single-node marginals due to overlapping factor marginals. For tree-structured graphical models, which factorize as $Q(\vx)=\prod_{\alpha}q_\alpha(\vx_\alpha)\prod_{i}q_i(x_i)^{1-d_i}$, the Bethe entropy is exact, and hence so is the Bethe free energy. On loopy graphs, the Bethe entropy $S^\beta$ isn't really even an entropy ({\it e.g.} it may be negative) because it neglects all statistical dependencies other than those present in the factor marginals. Nonetheless, the Bethe free energy is often close enough to the Gibbs free energy that its minima approximate the true marginals \cite{Welling:2001p7064}. Since stable fixed points of BP are minima of the Bethe free energy \cite{Yedidia:2001p6930,Heskes:2003p6866}, this helped explain why belief propagation is often so successful.

To emphasize that the Bethe free energy directly depends only on the marginals and not the joint distribution, we will write $F^\beta[\vq]$ where $\vq$ is a vector of pseudomarginals $q_\alpha(\vx_\alpha)$ for all $\alpha$ and all $\vx_\alpha$. Pseudomarginal space is the convex set \cite{JWainwright:2008p7968} of all $\vq$ that satisfy the positivity and local consistency constraints,
\be
\label{eq:LocalConsistency}
0\leq q_\alpha(\vx_\alpha)\leq 1\hspace{3em}
\sum_{\vx_\alpha\setminus x_i}q_\alpha(\vx_\alpha)=q_i(x_i)\hspace{3em}
\sum_{x_i} q_i(x_i)=1
\ee

\subsection{Pseudo-moment matching}
\label{sec:PseudoMomentMatching}

We now wish to correct for the deficiencies of belief propagation by identifying the parameters $\vtheta$ so that BP produces beliefs $\vb$ matching the true marginals $\vp$ of the target distribution $P(\vx)$. Since the fixed points of BP are stationary points of $F^\beta$ \cite{Yedidia:2001p6930}, one may simply try to find parameters $\vtheta$ that produce a stationary point in pseudomarginal space at $\vp$, which is a necessary condition for BP to reach a fixed point there. Simply evaluate the gradient at $\vp$, set it to zero, and solve for $\vtheta$.

Note that in principle this gradient could be used to directly minimize the Bethe free energy, but $F^\beta[\vq]$ is a complicated function of $\vq$ that usually cannot be minimized analytically \cite{Welling:2001p7064}. In contrast, here we are using it to solve for the parameters needed to move beliefs to a target location. This is much easier, since the Bethe free energy is linear in $\vtheta$. This approach to learning parameters has been described as `pseudo-moment matching' \cite{Wainwright:2003p7171, Welling:2003p7058, welling2005}.

The $L_\vq$-element vector $\vq$ is an overcomplete representation of the pseudomarginals because it must obey the local consistency constraints (\ref{eq:LocalConsistency}). It is convenient to express the pseudomarginals in terms of a minimal set of parameters $\veta$ with the smaller dimensionality $L_\vtheta$ as $\vtheta$ and $\vphi$, using an affine transform
\be
\label{eq:MinimalRepresentation}
\vq=W\veta+\vk
\ee
where $W$ is an $L_\vq\times L_\vtheta$ rectangular matrix. One example is the expectation parameters $\veta_\alpha=\sum_{\vx_\alpha} q_\alpha(\vx_\alpha)\vphi_\alpha(\vx_\alpha)$ \cite{JWainwright:2008p7968}, giving the energy simply as $U=-\vtheta\cdot\veta$. The gradient with respect to those minimal parameters is
\begin{align}
\label{eq:BetheGradient}
\frac{\partial F^\beta}{\partial\veta}&=\frac{\partial U}{\partial\veta}-\frac{\partial S^\beta}{\partial\vq}\frac{\partial \vq}{\partial\veta}=-\vtheta-\frac{\partial S^\beta}{\partial\vq}W
\end{align}
The Bethe entropy gradient is simplest in the overcomplete representation $\vq$,
\be
\label{eq:BetheGradientOvercomplete}
\frac{\partial S^\beta}{\partial q_\alpha(\vx_\alpha)}=-1-\log{q_\alpha(\vx_\alpha)}\hspace{3em}
\frac{\partial S^\beta}{\partial q_i(x_i)}=(-1-\log{q_i(x_i)})(1-d_i)
\ee
Setting the gradient (\ref{eq:BetheGradient}) to zero, we have a simple linear equation for the parameters $\vtheta$ that tilt the Bethe free energy surface (Figure \ref{fig:BetheFreeEnergy}A) enough to place a stationary point at the desired marginals $\vp$:
\be
\label{eq:PseudoMomentMatching}
\vtheta=-\left.\frac{\partial S^\beta}{\partial\vq}\right|_{\vp}W
\ee

\subsection{Unbelievable marginals}
\label{sec:Unbelievable}

It is well known that BP may converge on fixed points that cannot be realized as marginals of any joint distribution. In this section we show that the converse is also true: There are some distributions whose marginals cannot be realized as beliefs for any set of couplings. In these cases, existing methods for learning often yield poor results, sometimes even worse than performing no learning at all. This is surprising in view of claims to the contrary: \cite{Wainwright:2003p7171, JWainwright:2008p7968} state that belief propagation run after pseudo-moment matching can always reach a fixed point that reproduces the target marginals. While BP does technically have such fixed points, they are not always stable and thus may not be reachable by running belief propagation.

\begin{definition}
A set of marginals are `unbelievable' if belief propagation cannot converge to them for any set of parameters.
\end{definition}

For belief propagation to converge to the target --- namely, the marginals $\vp$ --- a zero gradient is not sufficient: The Bethe free energy must also be a local minimum \cite{Heskes:2003p6866}.\footnote{Even this is not sufficient, but it is necessary.} This requires a positive-definite Hessian of $F^\beta$ (the `Bethe Hessian' $H$) in the subspace of pseudomarginals that satisfies the local consistency constraints. Since the energy $U$ is linear in the pseudomarginals, the Hessian is given by the second derivative of the Bethe entropy,
\be
\label{eq:BetheHessian}
H=\frac{\partial^2F^\beta}{\partial\veta^2}=-W^\top\frac{\partial^2S^\beta}{\partial\vq^2}W
\ee
where projection by $W$ constrains the derivatives to the subspace spanned by the minimal parameters $\veta$. If this Hessian is positive definite when evaluated at $\vp$ then the parameters $\vtheta$ given by (\ref{eq:PseudoMomentMatching}) give $F^\beta$ a minimum at the target $\vp$. If not, then the target cannot be a stable fixed point of loopy belief propagation. In Section \ref{sec:Experiments}, we calculate the Bethe Hessian explicitly for a binary model with pairwise interactions.

\begin{figure}
\centering
\includegraphics*[width=5.5in]{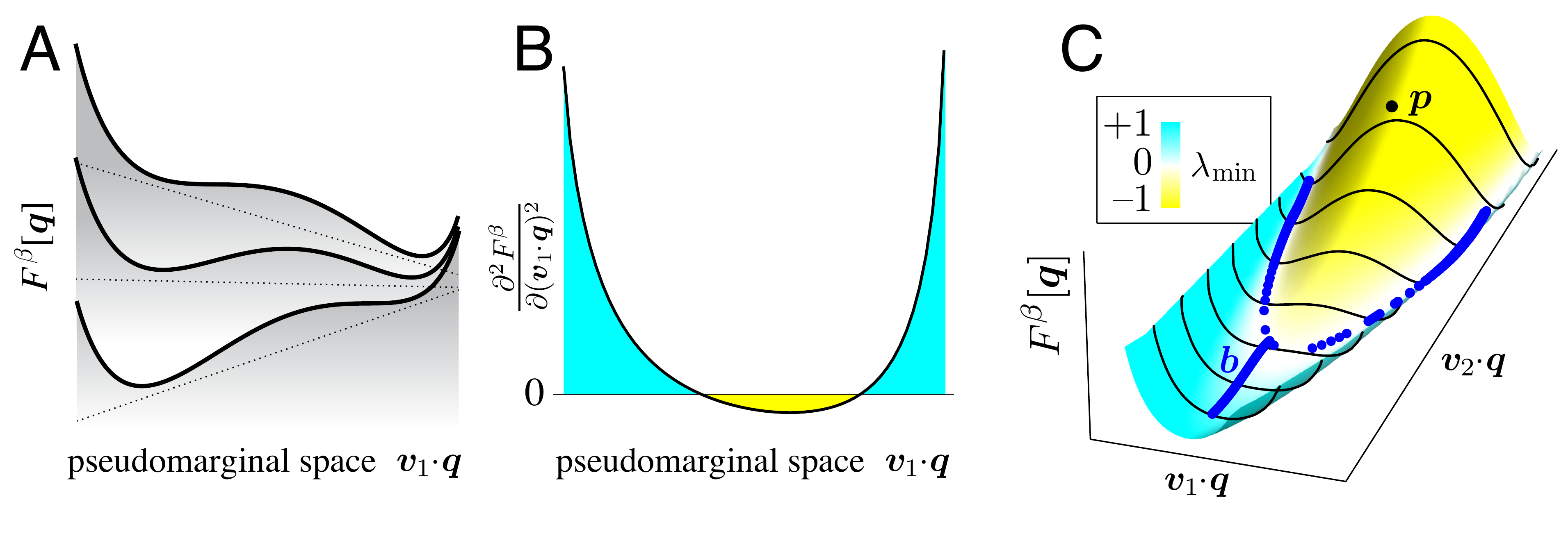}
\caption{Landscape of Bethe free energy for the binary graphical model with pairwise interactions. ({\bf A}) A slice through the Bethe free energy (solid lines) along one axis $\vv_1$ of pseudomarginal space, for three different values of parameters $\vtheta$. The energy $U$ is linear in the pseudomarginals (dotted lines), so varying the parameters only changes the tilt of the free energy. This can add or remove local minima. ({\bf B}) The second derivatives of the free energies in (A) are all identical. Where the second derivative is positive, a local minimum can exist (cyan); where it is negative (yellow), no parameters can produce a local minimum. ({\bf C}) A two-dimensional slice of the Bethe free energy, colored according to the minimum eigenvalue $\lambda_{\rm min}$ of the Bethe Hessian. During a run of Bethe wake-sleep learning, the beliefs (blue dots) proceed along $\vv_2$ toward the target marginals $\vp$. Stable fixed points of BP can exist only in the believable region (cyan), but the target $\vp$ resides in an unbelievable region (yellow). As learning equilibrates, the fixed points jump between believable regions on either side of the unbelievable zone.}
\label{fig:BetheFreeEnergy}
\end{figure}

\begin{theorem}
\label{thm:UnbelievableMarginals}
Unbelievable marginal probabilities exist.
\end{theorem}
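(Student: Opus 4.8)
The plan is to exhibit a single explicit target $\vp$ whose node and factor marginals cannot be a stable fixed point of BP for \emph{any} parameters, using the characterization already assembled in Section~\ref{sec:Unbelievable}. First I would isolate the structural point: if BP converges to $\vp$ then $\vp$ is a stable fixed point, hence a local minimum of $F^\beta$, hence (as noted) the Bethe Hessian $H$ of \req{eq:BetheHessian} must be positive definite on the local-consistency subspace at $\vp$. Conversely, for any interior $\vp$ the zero-gradient condition is \emph{never} the obstruction, since \req{eq:PseudoMomentMatching} writes down parameters $\vtheta$ that put a stationary point of $F^\beta$ exactly at $\vp$ (the logarithms in $\partial S^\beta/\partial\vq$ are finite there). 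Crucially, $H$ in \req{eq:BetheHessian} is built from $\partial^2 S^\beta/\partial\vq^2$ alone and does not depend on $\vtheta$, because $U$ is linear in the pseudomarginals. Therefore $\vp$ is unbelievable as soon as $H|_{\vp}$ fails to be positive definite, and this is a property of the target marginals only. The theorem thus reduces to producing one interior $\vp$ at which the constrained Hessian of $S^\beta$ has a negative eigenvalue, i.e.\ at which the Bethe entropy fails to be concave along the surface \req{eq:LocalConsistency}.

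To produce such a $\vp$ I would work in the smallest nontrivial setting: a binary model with pairwise interactions on a single short loop (a triangle suffices), where every node has degree $d_i=2$, so the node-entropy terms enter $S^\beta$ with coefficient $1-d_i=-1$ and hence contribute a \emph{convex} piece $-S_i$, while the factor entropies $S_\alpha$ remain concave. Taking a symmetric target --- uniform singletons and equal, strongly (anti)correlated pair marginals parametrized by one correlation $\rho$ --- lets me diagonalize $H$ using the cyclic and spin-flip symmetries of the graph, collapsing it to a handful of scalar eigenvalues written as explicit functions of $\rho$ via \req{eq:BetheGradientOvercomplete} and one more differentiation. It then suffices to check that, in the direction that mixes a pair marginal against its two singletons, one of these eigenvalues crosses zero and becomes negative as $\rho$ is pushed toward its extreme, where the (concave) factor curvature can no longer outweigh the (convex) node contribution. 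Finally, to phrase the result as a statement about an actual data distribution, I would let $P(\vx)$ be, say, the maximum-entropy joint distribution consistent with this $\vp$; its marginals are then unbelievable by the reduction of the previous paragraph.

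The main obstacle is the concrete eigenvalue computation: one must choose the graph, the symmetric marginal family, and the test direction so that the constrained Hessian is simultaneously small enough to diagonalize by hand and genuinely indefinite somewhere in the interior of pseudomarginal space. The delicate bookkeeping is the local-consistency projection: the minimal-coordinate matrix $W$ must be used so that concavity is probed only along directions admissible under \req{eq:LocalConsistency}, not along the forbidden directions on which $S^\beta$ is trivially concave; getting that projection right (equivalently, eliminating the singletons via $q_i=\sum q_\alpha$ before differentiating) is where care is needed, while the rest is routine substitution into \req{eq:BetheHessian}.
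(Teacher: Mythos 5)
Your overall reduction is exactly the paper's: the zero-gradient condition is never the obstruction (pseudo-moment matching \req{eq:PseudoMomentMatching} supplies a $\vtheta$ with a stationary point at any interior $\vp$), the Hessian \req{eq:BetheHessian} does not depend on $\vtheta$ because $U$ is linear in the pseudomarginals, so unbelievability is decided entirely by whether the constrained Bethe Hessian at $\vp$ has a negative eigenvalue; and a symmetric one-parameter family of marginals is the right way to make that eigenvalue computation tractable. The genuine gap is the choice of example: a triangle cannot work. A cycle of pairwise factors has factor-graph nullity one (a single independent loop), and for such graphs the Bethe free energy is convex over the local-consistency polytope, so the projected Hessian is positive definite at every interior pseudomarginal and \emph{every} such $\vp$ is believable. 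This is exactly what the paper's quoted result from \cite{Watanabe:2011p8171} delimits: indefiniteness of the Bethe Hessian somewhere in pseudomarginal space is guaranteed only for graphs with at least two independent loops. Your intuition that the convex $(1-d_i)S_i$ node terms can overwhelm the concave factor terms is sound in general, but on a single cycle the restriction to directions satisfying \req{eq:LocalConsistency} always preserves concavity, so the eigenvalue you hope to drive negative never crosses zero no matter how extreme $\rho$ is.

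The repair is to take a graph of nullity at least two. The paper uses the fully connected four-node binary model with uniform coupling, $E(\vx)=-J\sum_{(ij)}x_ix_j$, for which the same symmetry reduction you describe goes through: $p_i=\tfrac{1}{2}$, $p_{ij}(x_i=x_j)=\rho$, and the fully symmetric eigenvector of the Bethe Hessian \req{eq:BetheHessianIsing} acquires a negative eigenvalue for $\rho>\tfrac{3}{8}$ (equivalently $J>0.316$). The paper also takes the target $P(\vx)$ to be that Ising model itself rather than a separately constructed maximum-entropy distribution, which is a harmless difference. With the graph replaced, the rest of your argument is essentially the paper's proof.
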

\begin{proof}
Proof by example. The simplest unbelievable example is a binary graphical model with pairwise interactions between four nodes, $\vx\in\{-1,+1\}^4$, and the energy $E(\vx)=-J\sum_{(ij)}x_ix_j$. By symmetry and (\ref{eq:BoltzmannDistribution}), marginals of this target $P(\vx)$ are the same for all nodes and pairs: $p_i(x_i)=\frac{1}{2}$ and $p_{ij}(x_i=x_j)=\rho=(2+4/(1+e^{2J}-e^{4J}+e^{6J}))^{-1}$. Substituting these marginals into the appropriate Bethe Hessian (\ref{eq:BetheHessianIsing}) gives a matrix that has a negative eigenvalue for all $\rho>\frac{3}{8}$, or $J>0.316$. The associated eigenvector $\vu$ has the same symmetry as the marginals, with single-node components $u_i=\frac{1}{2}(-2+7\rho-8\rho^2+\sqrt{10-28\rho+81\rho^2-112\rho^3+64\rho^4})$ and pairwise components $u_{ij}=1$. Thus the Bethe free energy does not have a minimum at the marginals of these $P(\vx)$. Stable fixed points of BP occur only at local minima of the Bethe free energy \cite{Heskes:2003p6866}, and so BP cannot reproduce the marginals $\vp$ for any parameters. Hence these marginals are unbelievable.
\end{proof}

Not only do unbelievable marginals exist, but they are actually quite common, as we will see in Section \ref{sec:Experiments}. Graphical models with multinomial or gaussian variables and at least two loops always have some pseudomarginals for which the Hessian is not positive definite \cite{Watanabe:2011p8171}. On the other hand, all marginals with sufficiently small correlations are believable because they are guaranteed to have a positive-definite Bethe Hessian \cite{Watanabe:2011p8171}. Stronger conditions have not yet been described.

\subsection{Bethe wake-sleep algorithm}

When pseudo-moment matching fails to reproduce unbelievable marginals, an alternative is to use a gradient descent procedure for learning, analagous to the wake-sleep algorithm used to train Boltzmann machines \cite{Hinton:1983p8072}. The original rule can be derived as gradient descent of the Kullback-Leibler divergence between the target $P(\vx)$ and the graphical model $Q(\vx)$ (\ref{eq:BoltzmannDistribution}),
\be
D_{\rm KL}[P||Q]=\sum_\vs P(\vx)\log{\frac{P(\vx)}{Q(\vx)}}=F[P]-F[Q]
\ee
where $F$ is the Gibbs free energy (\ref{eq:FreeEnergy}) using the energy function (\ref{eq:EnergyFunction}). Here we use a new cost function, the `Bethe divergence' $D_\beta[\vp||\vb]$, by replacing these free energies by Bethe free energies \cite{welling:sutton:aistats2005} evaluated at the true marginals $\vp$ and at the beliefs $\vb$ obtained from BP fixed points,
\be
\label{eq:CostFunction}
D_\beta[\vp||\vb]=F^\beta[\vp]-F^\beta[\vb]
\ee
We use gradient descent to optimize this cost, with gradient
\be
\frac{dD_\beta}{d\vtheta}=\frac{\partial D_\beta}{\partial \vtheta}+\frac{\partial D_\beta}{\partial \vb}\frac{\partial \vb}{\partial \vtheta}
\ee
The data's free energy does not depend on the beliefs, so $\partial F^\beta[\vp]/\partial \vb=0$, and fixed points of belief propagation are stationary points of the Bethe free energy, so $\partial F^\beta[\vb]/\partial \vb=0$. Consequently $\partial D_\beta/\partial\vb=0$. Furthermore, the entropy terms of the free energies do not depend explicitly on $\vtheta$, so
\be
\label{eq:BetheWakeSleep}
\frac{dD_\beta}{d\vtheta}=\frac{\partial U(\vp)}{\partial \vtheta} -\frac{\partial U(\vb)}{\partial \vtheta}=-\veta(\vp)+\veta(\vb)
\ee
where $\veta(\vq)=\sum_{\vx}q(\vx)\vphi(\vx)$ are the expectations of the sufficient statistics $\vphi(\vx)$ under the pseudomarginals $\vq$. This gradient forms the basis of a simple learning algorithm. At each step in learning, belief propagation is run, obtaining beliefs $\vb$ for the current parameters $\vtheta$. The parameters are then changed in the opposite direction of the gradient,
\be
\label{eq:DeltaCost}
\Delta\vtheta=-\epsilon\frac{dD_\beta}{d\vtheta}=\epsilon(\veta(\vp)-\veta(\vb))
\ee
where $\epsilon$ is a learning rate. This generally increases the Bethe free energy for the beliefs while decreasing that of the data, hopefully allowing BP to draw closer to the data marginals. We call this learning rule the Bethe wake-sleep algorithm.

Within this algorithm, there is still the freedom of how to choose initial messages for BP at each learning iteration. The result depends on these initial conditions because BP can have several stable fixed points. One might re-initialize the messages to a fixed starting point for each run of BP, choose random initial messages for each run, or restart the messages where they stopped on the previous learning step. In our experiments we use the first approach, initializing to constant messages at the beginning of each BP run.

The Bethe wake-sleep learning rule sometimes places a minimum of $F^\beta$ at the true data distribution, such that belief propagation can give the true marginals as one of its (possibly multiple) fixed points. However, for the reasons provided above, this cannot occur where the Bethe Hessian is not positive definite.

\subsection{Ensemble belief propagation}

When the Bethe wake-sleep algorithm attempts to learn unbelievable marginals, the parameters and beliefs do not reach a fixed point but instead continue to vary over time (Figure \ref{fig:Averaging}A,B). Still, if learning reaches equilibrium, then the temporal average of beliefs is equal to the unbelievable marginals.

\begin{theorem}
If the Bethe wake-sleep algorithm reaches equilibrium, then unbelievable marginals are matched by the belief propagation fixed points averaged over the equilibrium ensemble of parameters.
\end{theorem}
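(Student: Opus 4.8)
The plan is to combine two observations: the wake-sleep increment $\Delta\vtheta=\epsilon(\veta(\vp)-\veta(\vb))$ of (\ref{eq:DeltaCost}) must average to zero once the parameter dynamics settle into a stationary ensemble, and $\veta$ is an affine function of the (pseudo)marginals, so the ensemble average passes through it.

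First I would make ``reaches equilibrium'' precise: the parameters $\vtheta_t$ produced by $\vtheta_{t+1}=\vtheta_t+\epsilon(\veta(\vp)-\veta(\vb_t))$, with $\vb_t$ the BP fixed point selected at $\vtheta_t$ under the fixed initialization rule, converge in distribution to a stationary ensemble $\mu$ (equivalently, the empirical time averages converge). Taking expectations of the update under $\mu$ and using stationarity, $\la\vtheta_{t+1}\ra_\mu=\la\vtheta_t\ra_\mu$, yields $\la\veta(\vp)-\veta(\vb)\ra_\mu=0$, i.e.\ $\veta(\vp)=\la\veta(\vb)\ra_\mu$. This is just the statement that the expected Bethe-divergence gradient $\la dD_\beta/d\vtheta\ra_\mu$ from (\ref{eq:BetheWakeSleep}) vanishes at equilibrium.

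Next I would use linearity and convexity. By definition $\veta(\vq)=\sum_\vx q(\vx)\vphi(\vx)$ is linear in the marginals $\vq$ (componentwise, $\veta_\alpha=\sum_{\vx_\alpha}q_\alpha(\vx_\alpha)\vphi_\alpha(\vx_\alpha)$), so $\la\veta(\vb)\ra_\mu=\veta(\la\vb\ra_\mu)$, where $\la\vb\ra_\mu$ is the ensemble-averaged belief vector. Every BP fixed point is locally consistent, hence lies in the convex pseudomarginal polytope (\ref{eq:LocalConsistency}); therefore so does $\la\vb\ra_\mu$, and $\veta(\la\vb\ra_\mu)$ is a bona fide point of expectation-parameter space. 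Combining, $\veta(\la\vb\ra_\mu)=\veta(\vp)$. Finally I would invoke injectivity of the minimal representation: the affine map $\veta\mapsto W\veta+\vk$ of (\ref{eq:MinimalRepresentation}) has $W$ of full column rank, since $\veta$ is by construction a minimal (non-redundant) coordinate system for pseudomarginal space, so distinct pseudomarginals have distinct $\veta$. Hence $\veta(\la\vb\ra_\mu)=\veta(\vp)$ forces $\la\vb\ra_\mu=\vp$: the equilibrium-averaged BP beliefs equal the target marginals, believable or not.

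I expect the only genuine obstacle to be the first step --- pinning down the meaning of ``equilibrium'' and justifying that the time-averaged increment truly vanishes. The parameter trajectory need not converge pointwise (for unbelievable $\vp$ it provably cannot), BP may fail to converge or may hop between coexisting fixed points at nearby $\vtheta$, and the selection rule must make $\vb_t$ a well-defined measurable function of the history; so the hypothesis is best phrased as existence of a stationary ensemble --- or, more weakly, as boundedness of $\{\vtheta_t\}$ together with vanishing Ces\`aro averages of $\Delta\vtheta$. Everything after that (linearity of $\veta$, convexity of the pseudomarginal polytope, and injectivity of $W$) is routine.
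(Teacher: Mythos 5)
Your proposal is correct and follows essentially the same route as the paper: at equilibrium the average parameter increment vanishes, hence $\la\veta(\vb)\ra=\veta(\vp)$, and the affine correspondence (\ref{eq:MinimalRepresentation}) between $\veta$ and the pseudomarginals transfers this to $\la\vb\ra=\vp$. Your version is just a more careful spelling-out (stationary ensemble, linearity of $\veta$ in $\vb$, injectivity of $W$) of the paper's three-line argument.
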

\begin{proof}
At equilibrium, the time average of the parameter changes is zero by definition, $\langle\Delta\vtheta\rangle_t=0$. Substitution of the Bethe wake-sleep equation, $\Delta\vtheta=\epsilon(\veta(\vp)-\veta(\vb(t)))$ (\ref{eq:BetheWakeSleep}), directly implies that $\langle\veta(\vb(t))\rangle_t=\veta(\vp)$. The deterministic mapping (\ref{eq:MinimalRepresentation}) from the minimal representation to the pseudomarginals gives $\la \vb(t)\ra_t=\vp$.
\end{proof}

After learning has equilibrated, fixed points of belief propagation occur with just the right frequency so that they can be averaged together to reproduce the target distribution exactly (Figure \ref{fig:Averaging}C). Note that none of the individual fixed points may be close to the true marginals. We call this inference algorithm {\it ensemble} belief propagation (eBP).

Ensemble BP produces perfect marginals by exploiting a constant, small amplitude learning, and thus assumes that the correct marginals are perpetually available. Yet it also works well when learning is turned off, if parameters are drawn randomly from a gaussian distribution with mean and covariance matched to the equilibrium distribution, $\vtheta\sim\mathcal{N}(\bar{\vtheta},\Sigma_\vtheta)$. In the simulations below (Figures \ref{fig:Averaging}C--D, \ref{fig:Performance}B--C), $\Sigma_\vtheta$ was always low-rank, and only one or two principle components were needed for good performance. The gaussian ensemble is not quite as accurate as continued learning (Figure \ref{fig:Performance}B,C), but the performance is still markedly better than any of the available fixed points.

If the target is not within a convex hull of believable pseudomarginals, then learning cannot reach equilibrium: Eventually BP gets as close as it can but there remains a consistent difference $\veta(\vp)-\veta(\vb)$, so $\vtheta$ must increase without bound. Though possible in principle, we did not observe this effect in any of our experiments. There may also be no equilibrium if belief propagation at each learning iteration fails to converge.

\begin{figure}
\centering
\includegraphics*[width=5.5in]{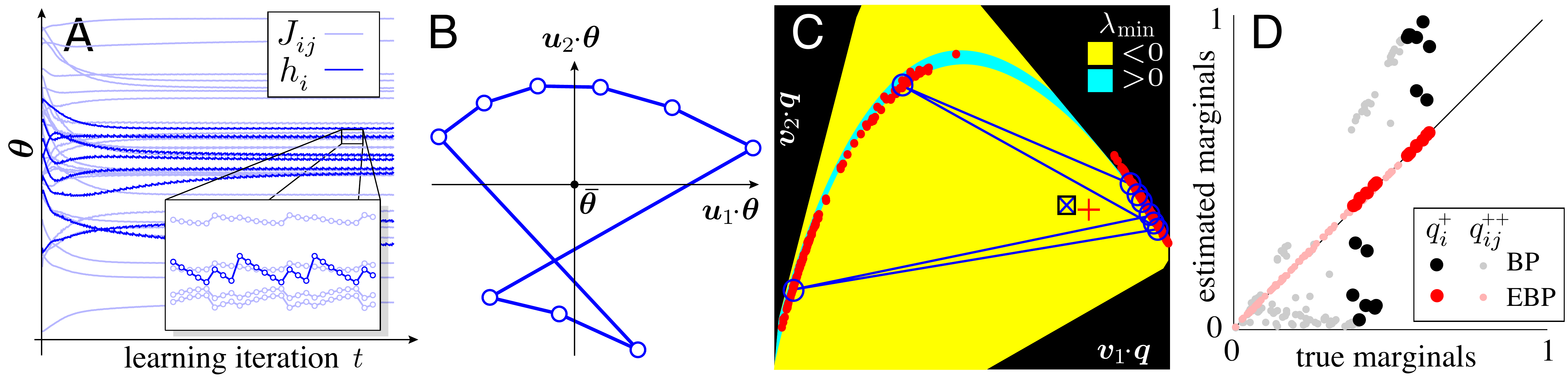}
\caption{Averaging over variable couplings can produce marginals otherwise unreachable by belief propagation. ({\bf A}) As learning proceeds, the Bethe wake-sleep algorithm causes parameters $\vtheta$ to converge on a discrete limit cycle when attempting to learn unbelievable marginals. ({\bf B}) The same limit cycle, projected onto their first two principal components $\vu_1$ and $\vu_2$ of $\vtheta$ during the cycle. ({\bf C}) The corresponding beliefs $\vb$ during the limit cycle (blue circles), projected onto the first two principal components $\vv_1$ and $\vv_2$ of the trajectory through pseudomarginal space. Believable regions of pseudomarginal space are colored with cyan and the unbelievable regions with yellow, and inconsistent pseudomarginals are black. Over the limit cycle, the average beliefs $\bar{\vb}$ (blue $\times$) are precisely equal to the target marginals $\vp$ (black $\square$). The average $\bar{\vb}$ (red $+$) over many fixed points of BP (red dots) generated from randomly perturbed parameters $\bar{\vtheta}+\delta\vtheta$ still produces a better approximation of the target marginals than any of the individual believable fixed points. ({\bf D}) Even the best amongst several BP fixed points cannot match unbelievable marginals (black and grey). Ensemble BP leads to much improved performance (red and pink).}
\label{fig:Averaging}
\end{figure}

\section{Experiments}
\label{sec:Experiments}

The experiments in this section concentrate on the Ising model: $N$ binary variables, $\vs\in\{-1,+1\}^N$, with factors comprising individual variables $x_i$ and pairs $x_i$, $x_j$. The energy function is $E(\vx)=-\sum_i h_i x_i-\sum_{(ij)} J_{ij}x_ix_j$. Then the sufficient statistics are the various first and second moments, $x_i$ and $x_ix_j$, and the natural parameters are $h_i$, $J_{ij}$. We use this model both for the target distributions and the model.

We parameterize pseudomarginals as $\{q_i^\tp,q_{ij}^{\tp\tp}\}$ where $q_i^\tp=q_i(x_i=+1)$ and $q_{ij}^{\tp\tp}=q_{ij}(x_i=x_j=+1)$ \cite{Welling:2001p7064}. The remaining probabilities are linear functions of these values. Positivity constraints and local consistency constraints then appear as $0\leq q_i^+\leq 1$ and $\max(0,q_i^++q_j^+-1)\leq q_{ij}^{++}\leq\min(q_i^+,q_j^+)$. If all the interactions are finite, then the inequality constraints are not active \cite{Yedidia:2005p7837}. In this parameterization, the elements of the Bethe Hessian (\ref{eq:BetheHessian}) are
\begin{subequations}
\label{eq:BetheHessianIsing}
\begin{align}
-\frac{\partial^2S^\beta}{\partial q_i^\tp \partial q_j^\tp}=&
\ \delta_{i,j}(1-d_i)\lbr(q_i^\tp)^{-1}+(1-q_i^\tp)^{-1}\rbr
+\delta_{j\in N_i}\lbr(1-q_i^\tp-q_j^\tp+q_{ij}^{\tp\tp})^{-1}\rbr\\
\nonumber &+\delta_{i,j}\sum_{k\in N_i}\lbr(q_i^\tp-q_{ik}^{\tp\tp})^{-1}+(1-q_i^\tp-q_k^\tp+q_{ik}^{\tp\tp})^{-1}\rbr\\
-\frac{\partial^2S^\beta}{\partial q_{i}^\tp \partial q_{jk}^{\tp\tp}}=
&-\delta_{i,j}\lbr(q_i^\tp-q_{ik}^{\tp\tp})^{-1}+(1-q_i^\tp-q_k^\tp+q_{ik}^{\tp\tp})^{-1}\rbr\\
\nonumber&-\delta_{i,k}\lbr(q_i^\tp-q_{ij}^{\tp\tp})^{-1}+(1-q_i^\tp-q_j^\tp+q_{ij}^{\tp\tp})^{-1}\rbr\\
-\frac{\partial^2S^\beta}{\partial q_{ij}^{\tp\tp} \partial q_{k\ell}^{\tp\tp}}=
&\ \delta_{ij,k\ell}\lbr(q_{ij}^{\tp\tp})^{-1}+(q_i^\tp-q_{ij}^{\tp\tp})^{-1}+(q_j^\tp-q_{ij}^{\tp\tp})^{-1}+(1-q_i^\tp-q_j^\tp+q_{ij}^{\tp\tp})^{-1}\rbr
\end{align}
\end{subequations}

Figure \ref{fig:Performance}A shows the fraction of marginals that are unbelievable for 8-node, fully-connected Ising models with random coupling parameters $h_i\sim\mathcal{N}(0,\frac{1}{3})$ and $J_{ij}\sim\mathcal{N}(0,\sigma_J)$. For $\sigma_J\gtrsim\frac{1}{4}$, most marginals cannot be reproduced by belief propagation with any parameters, because the Bethe Hessian (\ref{eq:BetheHessianIsing}) has a negative eigenvalue.

\begin{figure}
\centering
\includegraphics*[width=5.5in]{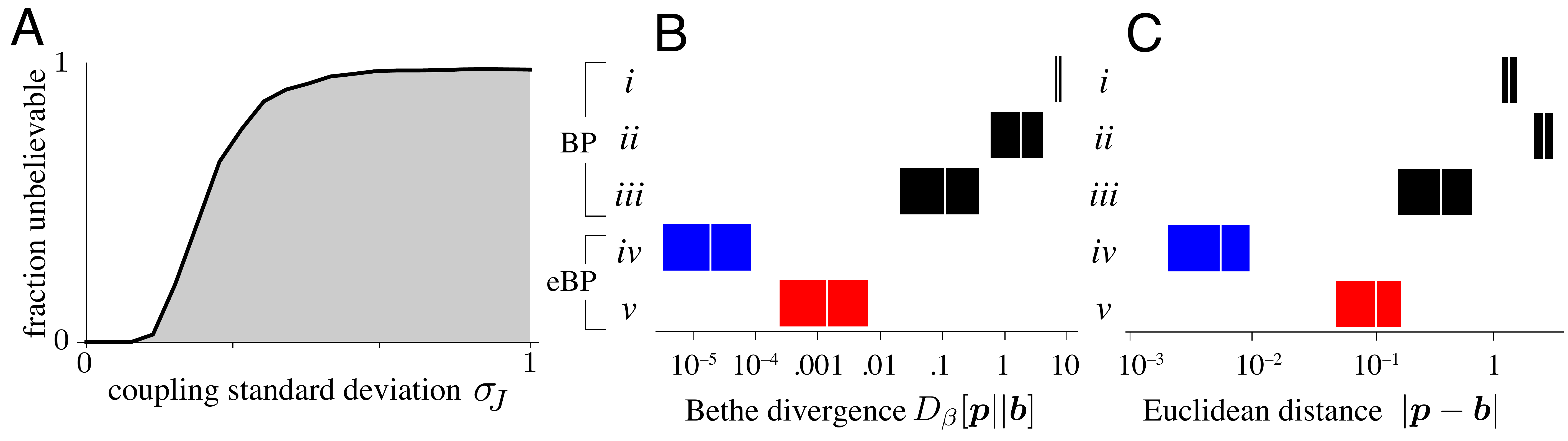}
\caption{Performance in learning unbelievable marginals. ({\bf A}) Fraction of marginals that are unbelievable. Marginals were generated from fully connected, 8-node binary models with random biases and pairwise couplings, $h_i\sim\mathcal{N}(0,\frac{1}{3})$ and $J_{ij}\sim\mathcal{N}(0,\sigma_J)$. ({\bf B},{\bf C}) Performance of five models on 370 unbelievable random target marginals (Section \ref{sec:Experiments}), measured with Bethe divergence $D_\beta[\vp||\vb]$ (B) and Euclidean distance $|\vp-\vb|$ (C). Target were generated as in (A) with $\sigma_J=\frac{1}{3}$, and selected for unbelievability. Bars represent central quartiles, and white line indicates the median. The five models are: ({\it i}) BP on the graphical model that generated the target distribution, ({\it ii}) BP after parameters are set by pseudomoment matching, ({\it iii}) the beliefs with the best performance encountered during Bethe wake-sleep learning, ({\it iv}) eBP using exact parameters from the last 100 iterations of learning, and ({\it v}) eBP with gaussian-distributed parameters with the same first- and second-order statistics as {\it iv}.}
\label{fig:Performance}
\end{figure}

We generated 500 Ising model targets using $\sigma_J=\frac{1}{3}$, selected the unbelievable ones, and evaluated the performance of BP and ensemble BP for various methods of choosing parameters $\vtheta$. Each run of BP used exponential temporal message damping of 5 time steps \cite{Mooij:2005p7495}, $\vm^{t+1}=a\vm^{t}+(1-a)\vm_{\rm undamped}$ with $a=e^{-1/5}$. Fixed points were declared when messages changed by less than $10^{-9}$ on a single time step. We evaluated BP performance for the actual parameters that generated the target (\ref{eq:BoltzmannDistribution}), pseudomoment matching (\ref{eq:PseudoMomentMatching}), and at best-matching beliefs obtained at any time during Bethe wake-sleep learning. We also measured eBP performance for two parameter ensembles: the last 100 iterations of Bethe wake-sleep learning, and parameters sampled from a gaussian $\mathcal{N}(\bar{\vtheta},\Sigma_\vtheta)$ with the same mean and covariance as that ensemble.

Belief propagation gave a poor approximation of the target marginals, as expected for a model with many strong loops. Even with learning, BP could never get the correct marginals, which was guaranteed by selection of unbelievable targets. Yet ensemble belief propagation gave excellent results. Using the exact parameter ensemble gave orders of magnitude improvement, limited by the number of beliefs being averaged. The gaussian parameter ensemble also did much better than even the best results of BP.

\section{Discussion}

Other studies have also made use of the Bethe Hessian to draw conclusions about belief propagation. For instance, the Hessian reveals that the Ising model's paramagnetic state becomes unstable in BP for large enough couplings \cite{Mooij:2005p6759}. For another example, when the Hessian is positive definite throughout pseudomarginal space, then the Bethe free energy is convex and thus BP has a unique fixed point \cite{Heskes:2004p6878}. Yet the stronger interpretation appears to be underappreciated: When the Hessian is not positive definite for some pseudomarginals, then BP can never have a fixed point there, for any parameters.

One might hope that by adjusting the parameters of belief propagation in some systematic way, $\vtheta\to\vtheta_{\rm BP}$, one could fix the approximation and so perform exact inference. In this paper we proved that this is a futile hope, because belief propagation simply can never converge to certain marginals. However, we also provided an algorithm that does work: Ensemble belief propagation uses BP on several different parameters with different fixed points and averages the results. This approach preserves the locality and scalability which make BP so popular, but corrects for some of its defects at the cost of running the algorithm a few times. Additionally, it raises the possibility that a systematic compensation for the flaws of BP might exist, but only as a mapping from individual parameters to an ensemble of parameters $\vtheta\to\{\vtheta_{\rm eBP}\}$ that could be used in eBP.

An especially clear application of eBP is to discriminative models like Conditional Random Fields \cite{Lafferty:2001p8531}. These models are trained so that known inputs produce known inferences, and then generalize to draw novel inferences from novel inputs. When belief propagation is used during learning, then the model will fail even on known training examples if they happen to be unbelievable. Overall performance will suffer. Ensemble BP can remedy those training failures and thus allow better performance and more reliable generalization.

This paper addressed learning in fully-observed models only, where marginals for all variables were available during training. Yet unbelievable marginals exist for models with hidden variables as well. Ensemble BP should work as in the fully-observed case, but training will require inference over the hidden variables during both wake and sleep phases.

One important inference engine is the brain. When inference is hard, neural computations may resort to approximations, perhaps including belief propagation \cite{Litvak:2009p8230,Steimer:2009p8213,Ott:2007p8273,Shon:2005p8336,George:2009p8403}. It would be undesirable for neural circuits to have big blind spots, {\it i.e.} reasonable inferences it cannot draw, yet that is precisely what occurs in BP. By averaging over models with eBP, this blind spot can be eliminated. In the brain, synaptic weights fluctuate due to a variety of mechanisms. Perhaps such fluctuations allow averaging over models and thereby reach conclusions unattainable by a deterministic mechanism.

\subsubsection*{Acknowledgments}
The authors thank
Greg Wayne
for helpful conversations.


\small{
\bibliographystyle{plos}
\bibliography{UnbelievableMarginals_3}

\begin{thebibliography}{10}
\providecommand{\url}[1]{\texttt{#1}}
\providecommand{\urlprefix}{URL }
\expandafter\ifx\csname urlstyle\endcsname\relax
  \providecommand{\doi}[1]{doi:\discretionary{}{}{}#1}\else
  \providecommand{\doi}{doi:\discretionary{}{}{}\begingroup
  \urlstyle{rm}\Url}\fi
\providecommand{\bibAnnoteFile}[1]{%
  \IfFileExists{#1}{\begin{quotation}\noindent\textsc{Key:} #1\\
  \textsc{Annotation:}\ \input{#1}\end{quotation}}{}}
\providecommand{\bibAnnote}[2]{%
  \begin{quotation}\noindent\textsc{Key:} #1\\
  \textsc{Annotation:}\ #2\end{quotation}}
\providecommand{\eprint}[2][]{\url{#2}}

\bibitem{Cooper:1990p8103}
Cooper G (1990) The computational complexity of probabilistic inference using
  bayesian belief networks.
\newblock Artificial intelligence 42: 393--405.
\bibAnnoteFile{Cooper:1990p8103}

\bibitem{Pearl:1988p7576}
Pearl J (1988) Probabilistic reasoning in intelligent systems: networks of
  plausible inference.
\newblock Morgan Kaufmann Publishers, San Mateo CA.
\bibAnnoteFile{Pearl:1988p7576}

\bibitem{Kschischang:2001p8492}
Kschischang F, Frey B, Loeliger H (2001) Factor graphs and the sum-product
  algorithm.
\newblock IEEE Transactions on Information Theory 47: 498--519.
\bibAnnoteFile{Kschischang:2001p8492}

\bibitem{Bishop:2006p8189}
Bishop C (2006) Pattern recognition and machine learning.
\newblock Springer New York.
\bibAnnoteFile{Bishop:2006p8189}

\bibitem{JWainwright:2008p7968}
Wainwright M, Jordan M (2008) Graphical models, exponential families, and
  variational inference.
\newblock Foundations and Trends in Machine Learning 1: 1--305.
\bibAnnoteFile{JWainwright:2008p7968}

\bibitem{Yedidia:2001p6930}
Yedidia JS, Freeman WT, Weiss Y (2000) Generalized belief propagation.
\newblock In: IN NIPS 13. MIT Press, pp. 689--695.
\bibAnnoteFile{Yedidia:2001p6930}

\bibitem{Heskes:2003p6866}
Heskes T (2003) Stable fixed points of loopy belief propagation are minima of
  the {Bethe} free energy.
\newblock Advances in Neural Information Processing Systems 15: 343--350.
\bibAnnoteFile{Heskes:2003p6866}

\bibitem{Welling:2001p7064}
Welling M, Teh Y (2001) Belief optimization for binary networks: A stable
  alternative to loopy belief propagation.
\newblock In: Proceedings of the 17th Conference in Uncertainty in Artificial
  Intelligence. Morgan Kaufmann Publishers Inc., pp. 554--561.
\bibAnnoteFile{Welling:2001p7064}

\bibitem{Wainwright:2003p7171}
Wainwright MJ, Jaakkola TS, Willsky AS (2003) Tree-reweighted belief
  propagation algorithms and approximate {ML} estimation by pseudo-moment
  matching.
\newblock In: AISTATS.
\bibAnnoteFile{Wainwright:2003p7171}

\bibitem{Welling:2003p7058}
Welling M, Teh Y (2003) Approximate inference in {B}oltzmann machines.
\newblock Artificial Intelligence 143: 19--50.
\bibAnnoteFile{Welling:2003p7058}

\bibitem{welling2005}
Parise S, Welling M (2005) Learning in markov random fields: An empirical
  study.
\newblock In: Joint Statistical Meeting. volume~4.
\bibAnnoteFile{welling2005}

\bibitem{Watanabe:2011p8171}
Watanabe Y, Fukumizu K (2011) Loopy belief propagation, {B}ethe free energy and
  graph zeta function.
\newblock arXiv cs.AI.
\bibAnnoteFile{Watanabe:2011p8171}

\bibitem{Hinton:1983p8072}
Hinton G, Sejnowski T (1983) Analyzing cooperative computation.
\newblock Proceedings of the Fifth Annual Cognitive Science Society, Rochester
  NY .
\bibAnnoteFile{Hinton:1983p8072}

\bibitem{welling:sutton:aistats2005}
Welling M, Sutton C (2005) Learning in markov random fields with contrastive
  free energies.
\newblock In: Cowell RG, Ghahramani Z, editors, In Proceedings of the Tenth
  International Workshop on Artificial Intelligence and Statistics. Society for
  Artificial Intelligence and Statistics, pp. 397-404.
\bibAnnoteFile{welling:sutton:aistats2005}

\bibitem{Yedidia:2005p7837}
Yedidia J, Freeman W, Weiss Y (2005) Constructing free-energy approximations
  and generalized belief propagation algorithms.
\newblock IEEE Transactions on Information Theory 51: 2282--2312.
\bibAnnoteFile{Yedidia:2005p7837}

\bibitem{Mooij:2005p7495}
Mooij J, Kappen H (2005) On the properties of the {B}ethe approximation and
  loopy belief propagation on binary networks.
\newblock Journal of Statistical Mechanics: Theory and Experiment : P11012.
\bibAnnoteFile{Mooij:2005p7495}

\bibitem{Mooij:2005p6759}
Mooij J, Kappen H (2005) Validity estimates for loopy belief propagation on
  binary real-world networks.
\newblock In: Advances in Neural Information Processing Systems. Cambridge, MA:
  MIT Press, pp. 945--952.
\bibAnnoteFile{Mooij:2005p6759}

\bibitem{Heskes:2004p6878}
Heskes T (2004) On the uniqueness of loopy belief propagation fixed points.
\newblock Neural Computation 16: 2379--2413.
\bibAnnoteFile{Heskes:2004p6878}

\bibitem{Lafferty:2001p8531}
Lafferty J, McCallum A, Pereira F (2001) Conditional random fields:
  Probabilistic models for segmenting and labeling sequence data.
\newblock Proceedings of the 18th International Conference on Machine Learning
  : 282--289.
\bibAnnoteFile{Lafferty:2001p8531}

\bibitem{Litvak:2009p8230}
Litvak S, Ullman S (2009) Cortical circuitry implementing graphical models.
\newblock Neural Computation 21: 3010--3056.
\bibAnnoteFile{Litvak:2009p8230}

\bibitem{Steimer:2009p8213}
Steimer A, Maass W, Douglas R (2009) Belief propagation in networks of spiking
  neurons.
\newblock Neural Computation 21: 2502--2523.
\bibAnnoteFile{Steimer:2009p8213}

\bibitem{Ott:2007p8273}
Ott T, Stoop R (2007) The neurodynamics of belief propagation on binary markov
  random fields.
\newblock In: Sch\"{o}lkopf B, Platt J, Hoffman T, editors, Advances in Neural
  Information Processing Systems 19, Cambridge, MA: MIT Press. pp. 1057--1064.
\bibAnnoteFile{Ott:2007p8273}

\bibitem{Shon:2005p8336}
Shon A, Rao R (2005) Implementing belief propagation in neural circuits.
\newblock Neurocomputing 65--66: 393--399.
\bibAnnoteFile{Shon:2005p8336}

\bibitem{George:2009p8403}
George D, Hawkins J (2009) Towards a mathematical theory of cortical
  micro-circuits.
\newblock PLoS computational biology 5: 1--26.
\bibAnnoteFile{George:2009p8403}

\end{thebibliography}
}

\end{document}